\def\BibTeX{{\rm B\kern-.05em{\sc i\kern-.025em b}\kern-.08em
		T\kern-.1667em\lower.7ex\hbox{E}\kern-.125emX}}
\providecommand{\tabularnewline}{\\}
\theoremstyle{plain}
\newtheorem{thm}{\protect\theoremname}
\theoremstyle{definition}
\theoremstyle{plain}
\theoremstyle{remark}
\pgfplotsset{compat=1.14}
\DeclareMathOperator{\argmin}{idxmin}
\DeclareMathOperator{\simi}{sim}
\DeclareMathOperator{\support}{supp}
\providecommand{\definitionname}{Definition}
\providecommand{\lemmaname}{Lemma}
\providecommand{\remarkname}{Remark}
\providecommand{\theoremname}{Theorem}
\newcolumntype{P}[1]{>{\centering\arraybackslash}p{#1}}
\newcommand\ztag[1]{%
	\def\@currentlabel{#1}%
	\gdef\tmp{%
		\addtocounter{equation}{-1}%
		\def\theequation{#1}}%
	\aftergroup\aftergroup\aftergroup\aftergroup\aftergroup\aftergroup
	\aftergroup\aftergroup\aftergroup\aftergroup\aftergroup\aftergroup
	\aftergroup\aftergroup\aftergroup\aftergroup\aftergroup\aftergroup
	\aftergroup\aftergroup\aftergroup\aftergroup\aftergroup\aftergroup
	\aftergroup\aftergroup\aftergroup\aftergroup\aftergroup\aftergroup
	\aftergroup
	\tmp}
\newcommand\copyrighttext{%
	\footnotesize \copyright~2021 IEEE. Personal use of this material is permitted. Permission from IEEE must be obtained for all other uses, in any current or future media, including reprinting/republishing this material for advertising or promotional purposes,creating new collective works, for resale or redistribution to servers or lists, or reuse of any copyrighted component of this work in other works.}
\newcommand\copyrightnotice{%
	\begin{tikzpicture}[remember picture,overlay]
	\node[anchor=south,yshift=10pt] at (current page.south) {\fbox{\parbox{\dimexpr\textwidth-\fboxsep-\fboxrule\relax}{\copyrighttext}}};
	\end{tikzpicture}%
}
\begin{document}
	\title{COPS: Controlled Pruning Before Training Starts
		%\thanks{Identify applicable funding agency here. If none, delete this.}
	}
	
	\author{\IEEEauthorblockN{Wimmer Paul}
		\IEEEauthorblockA{\textit{Image Processing} \\
			\textit{Robert Bosch GmbH \& L\"ubeck University}\\
			71229 Leonberg, Germany \\
			\{Paul.Wimmer,}
		\and
		\IEEEauthorblockN{Mehnert Jens}
		\IEEEauthorblockA{\textit{Image Processing} \\
			\textit{Robert Bosch GmbH}\\
			71229 Leonberg, Germany \\
			JensEricMarkus.Mehnert,}
		\and
		\IEEEauthorblockN{Condurache Alexandru}
		\IEEEauthorblockA{\textit{Engineering Cognitive Systems} \\
			\textit{Robert Bosch GmbH \& L\"ubeck University}\\
			70499 Stuttgart, Germany \\
			AlexandruPaul.Condurache\}@de.bosch.com}
	}
	
	\maketitle
	
	%%%%%%%%% ABSTRACT
	\begin{abstract}
		State-of-the-art deep neural network
		(DNN) pruning techniques, applied one-shot before training starts, evaluate sparse architectures
		with the help of a single criterion---called pruning score. 
		%as for instance the saliency score \cite{lee_2018} or the
		%importance score \cite{wang_2020}. %The network is pruned
		%such that the smallest possible score is achieved. 
		Pruning weights based on a solitary score works well for some architectures and pruning
		rates but may also fail for other ones. As a common baseline for pruning scores,
		we introduce the notion of a generalized synaptic score (GSS).
		In this work we do not concentrate on a single pruning criterion,
		but provide a framework for combining arbitrary GSSs to create more powerful pruning strategies. These \emph{CO}mbined \emph{P}runing \emph{S}cores
		(COPS) are obtained by solving a constrained optimization problem. Optimizing for
		more than one score prevents the sparse network to overly specialize on
		an individual task, thus COntrols Pruning before training Starts. The combinatorial optimization problem given by COPS
		is relaxed on a linear program (LP). This LP is solved analytically and determines a solution for COPS. %convex set, solved analytically and used to determine
		%a solution of the un-relaxed problem. 
		Furthermore, an algorithm to compute it for
		two scores numerically is proposed and evaluated. Solving COPS in such a way has lower complexity than the best general LP solver.
		In our experiments we compared pruning with COPS against state-of-the-art methods for different network architectures and image classification tasks and obtained improved results.
		
	\end{abstract}
	\copyrightnotice
	\section{Introduction}
	\makeatletter
	
	\global\long\def\ie{\emph{i.e.} }
	\global\long\def\Ie{\emph{I.e.} }
	\global\long\def\eg{\emph{e.g.} }
	\global\long\def\wrt{w.r.t. }
	\makeatother
	\global\long\def\dl{\nabla L(W)}%
	\global\long\def\mw{m\odot W}%
	\global\long\def\skp#1#2#3{\left\langle#1,#2\right\rangle_{#3}}%
	\global\long\def\id{\text{id}}%
	\global\long\def\DL{\Delta L(W)}%
	\global\long\def\r{\mathbb{R}}%
	\global\long\def\tk{\tilde{\kappa}}%
	\global\long\def\b#1#2#3{\mathcal{B}_{#1,#2}^{#3}}%
	\global\long\def\ball#1#2{\mathcal{B}_{#1}^{#2}}%
	\global\long\def\SD{\{1,\ldots,D\}}%
	\global\long\def\interior#1{\kern0pt  #1^{\mathrm{o}}}%
	\global\long\def\X{\mathcal{X}_{0,\sigma}}%
	\global\long\def\la{\lambda^{\ast}}%
	\global\long\def\ma{m^{\ast}}%
	\global\long\def\ia{I_{\ast}}%
	\global\long\def\pa{P_{\ast}}%
	\global\long\def\ila{I_{\la}}%
	\global\long\def\l{\lambda}%
	\global\long\def\il{I_{\l}}%
	\global\long\def\vgeq{\succcurlyeq}%
	\global\long\def\ltimes{\mathcal{L}_{\times}}%
	\global\long\def\xr{\mathcal{X}_{1,\sigma}}%
	\global\long\def\lmin{\lambda_{-}}%
	\global\long\def\lmax{\lambda_{+}}%
	\global\long\def\one{{\bf 1}}%
	\global\long\def\iplus{I_{\lambda,+}}%
	\global\long\def\ak{\alpha_{\kappa}}%
	\global\long\def\k{\kappa}%
	\global\long\def\mt{m\odot\Theta}%
	\global\long\def\imin{I_{\lambda,-}}%
	\global\long\def\s{\sigma}%
	\global\long\def\is{\mathcal{I}_{\leq\sigma}}%
	\global\long\def\ms{\overline{m}}%
	\global\long\def\1{\mathbbm{1}}%
	\renewcommand*{\thefootnote}{\arabic{footnote}}
	\newcommand{\cmark}{\ding{51}}% 
	\newcommand{\xmark}{\ding{55}}%
	
	\makeatletter
	\renewcommand{\ALG@beginalgorithmic}{\footnotesize}
	\makeatother
	
	\emph{Network pruning} \cite{han_2015,lecun_1990,mozer_1989}
	sets parts of a DNN's weights to zero. This can help to reduce
	the model's complexity and memory requirements, speed up inference \cite{blalock_2020} and even lead to better generalization ability for the network \cite{lecun_1990}.
	In recent years, \emph{training} sparse networks, \ie networks with many weights fixed at zero, became of interest to the
	deep learning community \cite{bellec_2018,frankle_2018,lee_2018,tanaka_2020,wang_2020}, providing the potential benefits of reduced runtime and memory requirements not only for inference but also for training. Sparse networks induce sparse gradient computations, but also the problem of weak gradient signals if too many parameters are pruned \cite{tanaka_2020,wang_2020}. One possibility to maintain a strong gradient signal while training only sparse parts of the network can be achieved by \emph{freezing} big portions of the weights during training at their initial value \cite{wimmer_2020}. But likewise, pruned networks can have a sufficient gradient flow, even for high pruning rates, if the pruning criterion is chosen cautiously \cite{tanaka_2020, wang_2020}.
	In this work we will focus on one-shot pruning methods, applied before
	training starts. By one-shot pruning we mean pruning in a single step,
	not iteratively. Thus, no resources are spent for pre-training \cite{frankle_2018},
	iterative pruning \cite{tanaka_2020} or a dynamical
	change of the network during training \cite{bellec_2018}. 
	\begin{figure}
		\includegraphics[width=\linewidth]{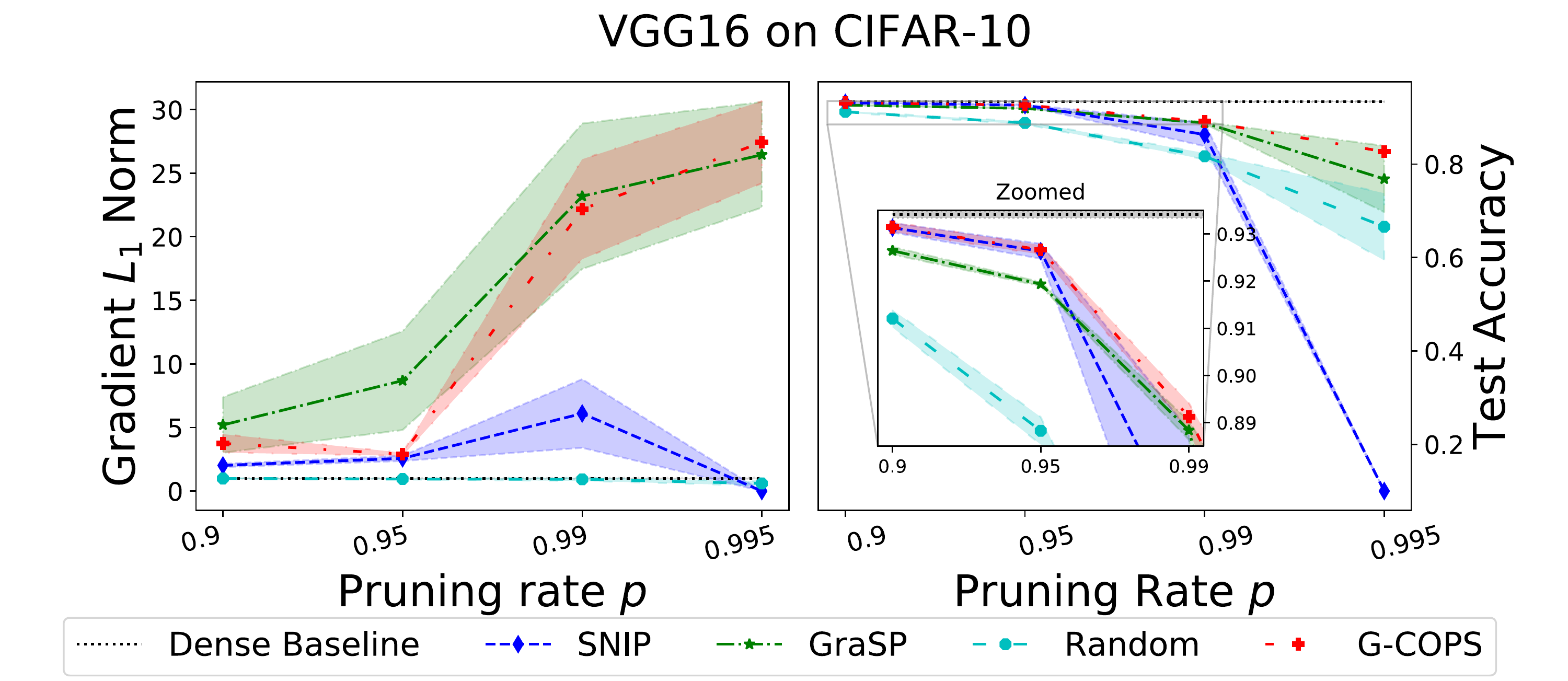}
		\caption{\label{fig:grasp_vs_snip}Comparison of dense baseline, random pruning, SNIP, GraSP and G-COPS for a VGG$16$ on CIFAR-$10$. Left: Mean gradient norm of the network's remaining weights, evaluated before training and normalized \wrt the dense network's gradient. Right: Corresponding test accuracies, see also Table \ref{tab:results_cifar10}. }
		
	\end{figure}
	\begin{table*}[htb]
		\definecolor{c1}{gray}{0.925}
		\definecolor{c2}{gray}{0.975}
		\newcolumntype{R}[1]{>{\raggedleft\let\newline\\\arraybackslash\hspace{0pt}}m{#1}}
		\centering
		\caption{Mathematical notation used in this work.}\label{tab:mathematics}
		%\begin{tabular}{|P{0.5\linewidth} p{0.4\linewidth}|}
		\begin{tabular}{>{\raggedleft\arraybackslash}p{0.399\linewidth} p{0.38\linewidth}}
			%\hline 
			\rowcolor{c1}
			$i$-th Unit Vector & {\scriptsize{}$e^{(i)}:=(\delta_{i,j})_{j=1}^{d}\in\r^{d}$ with $\delta_{i,j}=1$
				if $i=j$, else $\delta_{i,j}=0$}\tabularnewline
			\rowcolor{c2}
			$i$-th Vector Component & {\scriptsize{}$x_{i}:=\langle x,e^{(i)}\rangle:=x^{T}\cdot e^{(i)}$ for
				$x\in\r^{d},i\in\{1,\ldots,d\}$ }\tabularnewline
			\rowcolor{c1}
			Vector Support & {\scriptsize{}$\support x:=\{i\in\{1,\ldots,d\}:x_{i}\neq0\}$ for
				$x\in\r^{d}$}\tabularnewline
			\rowcolor{c2}
			$q$-Norm & {\scriptsize{}$\Vert x\Vert_{0}:=\#\support x$ and $\Vert\cdot\Vert_{q}$
				$q$-norm for $q\in(0,\infty]$}\tabularnewline
			\rowcolor{c1}
			%Closed $q$-Ball around zero & {\scriptsize{}$\ball rq:=\{y\in\r^{d}:\Vert y\Vert_{q}\leq r\}$ for
			%$q\in[0,\infty]$, $r>0$}\tabularnewline
			Pruning rate $p$ and Sparsity $\s$  & {\scriptsize{}$p:=1-d^{-1}\cdot\Vert x\Vert_{0}$ and $\s = d \cdot (1-p) = \Vert x \Vert_{0}$ for $x\in\r^{d}$}\tabularnewline
			\rowcolor{c2}
			Interior $\interior A$ of $A\subset\r^{d}$ & {\scriptsize{}$\interior A:=\{x\in A:\exists\varepsilon>0\text{ with } \{y\in\r^{d}:\Vert y - x \Vert_{2}< \varepsilon\} \subset A\}$}\tabularnewline
			\rowcolor{c1}
			$n-\argmin\{\alpha_{i}:i\in I\}$ for $I$ finite index set, $\alpha_{i}\in\r$, $n \in \mathbb{N}$ & {\scriptsize{}$\{J\subset I : \#J=\min\{\#I,n\} \text{ and } \alpha_{j}\leq\alpha_{i}\, \forall j\in J,i\in I\setminus J\}$ }\tabularnewline
			%\hline 
		\end{tabular}
	\end{table*}
	The two state-of-the-art (SOTA) pruning methods applied one-shot before
	training are \textit{Single-Shot Network Pruning based on Connection Sensitivity}
	(SNIP) \cite{lee_2018} and \textit{Gradient Signal Preservation}
	(GraSP) \cite{wang_2020}. For some conditions SNIP achieves better results
	than GraSP, for others GraSP outperforms SNIP %. But
	%usually, they do not work equally well in the same setup
	as shown
	for a VGG$16$ \cite{simonyan_2014} trained on CIFAR-$10$ \cite{krizhevsky_2012} in \figurename{} \ref{fig:grasp_vs_snip}, right-hand side.
	SNIP trains those weights having the biggest individual influence in
	changing the loss function at the beginning of training \cite{lee_2018},
	the weights with the highest \emph{saliency}. However, choosing weights only based on their solitary high saliency does not guarantee a sufficient information flow in the sparse network \cite{wang_2020}.
	For high pruning rates, this usually leads to a reduced gradient flow, visible in
	\figurename{} \ref{fig:grasp_vs_snip} for the pruning rate $p=0.995$, and finally to the pruning of whole layers \cite{tanaka_2020}. Here and in the following, gradient flow denotes the strength of the gradient signal.  Overcoming the low
	gradient flow for small pruning rates was a motivation for the GraSP
	method. %The GraSP score approximates the impact
	%of the removal of an individual weight on the network's gradient flow, called the weight's \emph{importance}, and removes those weights with the highest non-positive impa.
	GraSP zeroes those weights having the smallest \emph{importance}, an approximation of the impact of a weight's removal on the sparse network's gradient flow. Setting a weight with positive importance to zero most likely decreases the gradient flow. 
	Thus, GraSP pruned networks have a sufficiently strong gradient signal
	also for high pruning rates, as shown in \figurename{} \ref{fig:grasp_vs_snip}.
	But for lower pruning rates, where SNIP's gradient flow is strong enough, SNIP leads to better results than GraSP
	as it explicitly models the effect of pruning %on the loss function,
	%thus 
	on the networks ability to optimize the loss function. Summarized,
	a higher gradient flow does not necessarily induce a better performance. %,
	%see $p=0.9$ and $0.95$. 
	On the other hand, a gradient flow larger than zero
	is a necessary condition to train sparse networks successfully. Combining
	both, SNIP's focus on the weights' saliencies together with a strong information flow, guaranteed by GraSP, motivated us to merge
	them to achieve better results for both, high and low pruning rates. 
	In this work, all COPS computations involve a SNIP score and a second pruning score that might change between experiments. Thus, the COPS combination of SNIP and GraSP is called G-COPS. \figurename{} \ref{fig:grasp_vs_snip} compares pruning with G-COPS against SNIP and GraSP. 
	For all sparsities, G-COPS performs better than both SOTA methods SNIP and GraSP. Particularly for high pruning rates, training weights with high saliencies while guaranteeing a strong gradient flow via G-COPS improves GraSP and SNIP considerably. %if balanced properly. %via $\ak$. Bigger $\ak$ correspond to higher influence of GraSP on the final G-COPS score compared to SNIP, and vice versa.
	%For $p=0.995$, G-COPS with $\ak=0.9999$\footnote{Where $\ak$ denotes the influence of %the GraSP score, as introduced
	%	in Sections \ref{sec:Combined-Pruning-Score} and \ref{sec:Experiments}. Higher $\ak$ correspond to higher %influence of GraSP.} improves GraSP's accuracy
	%by $5.85\%$ while training only $0.8\%$ different, highly salient, parameters than
	%GraSP.%, which all have high saliencies. %The results obtained by COPS are more stable than their SNIP/GraSP
	%counterpart, shown by thinner standard deviation bands.

	\subsection{Main contributions}
	
	Our main contributions presented in this work are: 
	\setlist{nolistsep}
	\begin{itemize}[noitemsep]
		%\item Combining arbitrary and arbitrarily many generalized synaptic score based pruning methods via COPS.
		\item Combining arbitrary generalized synaptic score based pruning methods via COPS.
		\item Solving the resulting constrained, combinatorial optimization problem analytically by relaxing it on a LP.
		\item Providing an algorithm for solving the relaxed LP with lower complexity than the best known general LP solver \cite{jiang_2020}.
		\item Generating one-shot better performing sparse architectures for training than the two SOTA methods SNIP and GraSP by combining and balancing them properly. 
		\item Thereby, COPS is shown to be more efficient than naively combining two pruning scores linearly.
	\end{itemize}
	Figures in this work are best viewed in the colored online version. Table \ref{tab:mathematics} summarizes the mathematical notation. 
	%-------------------------------------------------------------------------
	\section{Related work}
	
	\paragraph{Model compression}
	
	can be achieved by methods such as \emph{quantization}, \emph{weight sharing},
	\emph{tensor decomposition}, \emph{low rank tensor approximation} or \emph{pruning}.
	Quantization reduces the number of bits used to represent the network's
	weights and/or activation maps \cite{han_2015}.
	$32$bit floats %utilized for standard deep learning 
	are replaced by low precision
	integers, thus decreasing memory consumption and speeding
	up inference. Memory reduction and speed up can also be achieved by
	weight sharing \cite{chen_2015}, tensor decomposition
	\cite{xue_2013} or low rank tensor approximation \cite{sainath_2013}
	to name only a few. 
	
	\paragraph{Pruning}
	
	is generally distinguished between \emph{structured} and \emph{unstructured}
	pruning \cite{blalock_2020}. Structured pruning deletes whole channels, neurons or even coarser
	structures, immediately
	resulting in reduced computation time. Unstructured pruning zeroes
	weights individually. Therefore, better results can be achieved and
	pruning to higher sparsity levels is possible, compared to structured pruning \cite{li_2016}.
	Setting single weights to zero does not automatically lead
	to a decreased number of computations.
	Specialized soft- and hardware \cite{han_2016} is needed
	to obtain also benefits in computational time.
	In this work, we evaluate COPS only on unstructured pruning methods
	as they often serve as foundations for corresponding structured methods, see for example \cite{li_2016}.
	But the theory derived in this paper also works for structured pruning, based on score functions, without
	the need of any further modification. 
	
	Pruned architectures can be created for instance by penalizing non-zero
	weights during training \cite{chauvin_1989}, magnitude
	pruning \cite{han_2015,frankle_2018} or saliency
	based pruning. For the latter, the significance of weights
	is measured with the Hessian of the loss \cite{lecun_1990} or the
	sensitivity of the loss with respect to inclusion/exclusion of each
	weight \cite{mozer_1989}.
	
	\paragraph{Training sparse networks}
	
	successfully from scratch was demonstrated by the Lottery Ticket Hypothesis \cite{frankle_2018}. The so trained networks, called \emph{winning tickets},
	can reach the same performance as the baseline architecture up to
	a high sparsity regime. But to find these winning tickets, many iterative
	pre-training and pruning steps are needed \cite{frankle_2018}.
	Well trainable sparse networks can also be found without costly pre-training via ranking saliencies of weights \cite{lee_2018,lee_2019}
	or preserving the dense network's information flow %before training 
	for the sparse architecture \cite{tanaka_2020,wang_2020}.
	This is done either one-shot \cite{lee_2018,wang_2020,lee_2019} or
	iteratively \cite{tanaka_2020,verdenius_2020}. \emph{Dynamic sparse training}
	\cite{bellec_2018} trains
	sparse networks but enables
	the sparse architectures to change during training. In this work, we
	focus on pruning methods applied one-shot before training starts.
	But for other scoring based pruning methods needing iterative pruning
	steps \cite{frankle_2018,tanaka_2020,verdenius_2020},
	or which are applied later on in training \cite{han_2015,lecun_1990,mozer_1989},
	our method can also be used without modifications. 
	
	\paragraph{Linear programming}
	
	In order to combine pruning scores, a combinatorial, constrained optimization
	problem is solved by %. In the spirit of Compressed Sensing \cite{candes_2005,cohen_2009,donoho_2006},
	relaxing the $\Vert\cdot\Vert_{0}$-``norm'' to the $\Vert\cdot\Vert_{1}$-norm. %Contrarily to Compressed Sensing, we relax
	%the set to optimize, not the target function. 
	The resulting relaxed problem is shown to be a LP. The dual problem
	of the relaxed problem is solved analytically with help
	of convex optimization methods \cite{boyd_2004}. In practice, contrarily to standard
	Simplex methods \cite{dantzig_1990}, we obtain the solution not by walking
	between vertices of the polytope, but by using the
	simple nevertheless robust bisection of intervals. Thus, no pivoting rules
	are needed to overcome worst case scenarios.
	LPs can also be solved fast and robustly with \emph{interior point methods} \cite{jiang_2020}. %These procedures avoid worst case scenarios by walking on paths in the polytope's interior.
	
	%-------------------------------------------------------------------------
	
	\section{Combined pruning score\label{sec:Combined-Pruning-Score}}
	
	In this section we introduce the COPS mask. The COPS mask is defined as a solution of a constrained optimization problem which optimizes
	the \emph{target score} function $S_{0}$ over \emph{pruning masks} $m\in \{0,1\}^D$, respecting a sparsity constraint $\Vert m \Vert_0 \leq \s$, while being controlled by a constraint $\k$
	on the \emph{control score} function $S_{1}$. %The general problem is proposed in Section \ref{subsec:problem}. The optimization problem is
	%solved with methods from convex optimization in Section \ref{subsec:Solution-to}. Further on, we discuss
	%the impact of the constraint $\kappa$ on the derived network architecture, Section \ref{subsec:How-to-Choose}.
	%Finally in Section \ref{subsec:Algorithm-to-Solve}, we propose an
	%end-to-end algorithm solving COPS and analyse its complexity in Section \ref{subsec:converg_compl}. The
	%mathematical results in Section \ref{subsec:Solution-to} are proven
	%for the general case of $N$ control score functions $S_{1},\ldots,S_{N}$
	%in the Supplementary Materials.
	%It can be shown that Theorem \ref{thm:problem_solved} also holds for an arbitrary number of control score functions $S_1,\ldots,S_N$ and COPS therefore might combine even more than two pruning scores.
	
	To be consistent with standard notation in convex optimization literature \cite{boyd_2004}, pruning scores are \emph{minimized} in this work. For instance, a high saliency/importance corresponds to a low SNIP/GraSP score, respectively. This is achieved by taking the negative of the original SNIP/GraSP score.
	
	\subsection{Basic assumptions and problem formulation\label{subsec:Problem-Formulation}}\label{subsec:problem}
	
	Let $f_{\Theta}$ be a DNN with vectorized weights $\Theta\in\r^{D}$.
	Pruning can be modelled by superimposing a pruning mask $m\in\{0,1\}^{D}$
	over the weights via $\mt = (m_i \cdot \Theta_{i})_{i=1}^D$. Here, $\odot$ denotes the \emph{Hadamard product}. %, %$u\odot v := (u_i\cdot v_i)_{i=1}^d
	%$ for $u,v \in \r^d$. 
	If a component $m_{i}$ of the pruning
	mask is equal to zero, the corresponding weight $\Theta_{i}$ will be pruned.
	If $m_{i}=1$, the weight $\Theta_{i}$ will be active.

	SOTA pruning methods applied without any pre-training
	use, up to changed signs, $S(m)=-\langle\left\vert \frac{\partial L}{\partial\Theta}\odot\Theta\right\vert ,m\rangle$
	(SNIP), or $S(m)=-\langle\left(\frac{\partial^{2}L}{\partial\Theta^{2}}\cdot\frac{\partial L}{\partial\Theta}\right)\odot\Theta,m\rangle$
	(GraSP) as scores. More general in \cite{tanaka_2020}, so called \emph{synaptic scores} $S(m)=\langle\frac{\partial R}{\partial\Theta}\odot\Theta,m\rangle$ are introduced. Here, $R$
	is a function depending on the weights $\Theta$ which does not need
	to be the network's loss function $L$. %For example, using $R=\sum_{i=1}^{D}\Theta_{i}^{2}$
	%leads to an equivalent formulation of magnitude based pruning. 
	The
	derivatives of $L$ and $R$ are approximated, if necessary, by a sufficient
	number of training data \cite{lee_2018, tanaka_2020, wang_2020}. For all three methods, 
	the \emph{score} $S(m)$ should indicate the performance of the pruned network $f_{m\odot\Theta}$, concerning a given criterion. Performance criteria are for example the network's gradient flow for GraSP or the ability to change the loss function for SNIP. All scores above are obtained by evaluating a linear score function $S$ on a pruning mask $m$. We call such scores \emph{generalized synaptic scores} (GSS) and $S$ a \emph{GSS function}. In the following, we are only interested in the evaluation $m\mapsto S(m)$. Thus, we assume the score function $S$ to be known and ignore potential dependencies, such as from $\Theta$ or $L$, in the notation.
	%\begin{defn}\label{def:gss}
	%[Generalized Synaptic Scores]
	%A \emph{generalized synaptic score function} is defined as a linear
	%function $S:\r^{D}\rightarrow\r,m\mapsto S(m)$. For $m \in \{0,1\}^D$, the score %$S(m)$ is called \emph{generalized synaptic score} (GSS).
	%\end{defn}
	For a GSS it holds $S(m)=\sum_{i=1}^{D}m_{i}\cdot S(e^{(i)})$
	by linearity. 
	If $\Theta_{i}$ is active, the score
	$S(e^{(i)})$ can be seen as the contribution of weight $\Theta_{i}$
	to the overall score of the network. As $S(m)$ is minimized in the following, pruning a weight $\Theta_{i}$ with high contribution $S(e^{(i)})$ is assumed to lead to better results than pruning a weight with small contribution.
	Therefore, the goal for pruning to sparsity $\s \in \SD$ with a single score function $S_0$ %, as done up to changed signs in \cite{lee_2018, tanaka_2020, wang_2020}, 
	is given by the optimization problem %to find a pruning
	%mask $\ma \in\r^{D}$ fulfilling a sparsity constraint $\Vert \ma \Vert_0 \leq \s$, $\s \in \SD$, and minimizing
	%the score function $S_{0}$. This leads to the optimization problem
	\begin{IEEEeqnarray}{c}
		\min_{m\in\X}  S_{0}(m)\;,\label{eq:unconstraint_generic} 
	\end{IEEEeqnarray}
	with 
	\begin{IEEEeqnarray}{c}
		\X:=\{m \in  \{0,1\}^D: \Vert m \Vert_0 \leq \s \}\;.
	\end{IEEEeqnarray}
	For a GSS function $S_0$, \eqref{eq:unconstraint_generic} is solved by $\ma \in \{0,1\}^D$ with
	%\begin{IEEEeqnarray}{c}
	%\ma=(\ma_{i})_{i=1}^{D}
	%\;\text{with}\;\ma_{i}=\begin{cases}
	%1 & ,\;i\in I^{\ast}\\
	%0 & ,\;else
	%\%end{cases}\;,\label{eq:m_ast_single_score}
	%\end{IEEEeqnarray}
	%\ie $\ma \in \{0,1\}^D$ and $\support \ma = I^\ast$, with
	%\begin{IEEEeqnarray}{c}
	%I^{\ast} \in \s-\argmin\{S_{0}(e^{(i)}):S_{0}(e^{(i)})<0\}\;.\label{eq:i_ast_single_score}
	%\end{IEEEeqnarray}
	\begin{IEEEeqnarray}{c}
		\support \ma \in \s-\argmin\{i \in \SD : S_{0}(e^{(i)}) < 0\} \;. \IEEEeqnarraynumspace
	\end{IEEEeqnarray} 
	As shown in \figurename{} \ref{fig:grasp_vs_snip},
	optimizing a single score does not provide the best results for all
	situations. By knowing the weakness of a score function,
	as the potential of a small gradient flow for SNIP, we can control it by
	constraining it. This is modelled by a constraint $\k \in \r$ on the control score function $S_{1}$.
	The resulting problem for GSS functions $S_{0}$ and $S_{1}$ is given by
	\begin{IEEEeqnarray}{c}
		\min_{m\in\X}S_{0}(m)\;,\;\text{such that}\;S_{1}(m)\leq\kappa\;.\ztag{$\mathcal{P}_0$} \label{eq:abstract_prob}
	\end{IEEEeqnarray} 
	In this form, since $\X$ is a discrete set, \eqref{eq:abstract_prob} is a combinatorial problem and cannot be solved with convex optimization
	methods. %In Section
	%\ref{subsec:Solution-to}, \eqref{eq:abstract_prob} is solved by
	Relaxing $\X$ on the convex optimization set 
	\begin{IEEEeqnarray}{c}
		\xr=\{m \in [0,1]^D : \Vert m \Vert_1 \leq \s\} \supset \X
	\end{IEEEeqnarray}
	%which
	%can be treated with convex optimization. 
	leads to the relaxed, linear problem
	\begin{IEEEeqnarray}{c}
		\min_{m\in\xr}S_{0}(m)\;,\text{\;such that}\;S_{1}(m)\leq\kappa\;.\ztag{$\mathcal{P}_1$}\label{eq:abstract_prob_relaxed}
	\end{IEEEeqnarray}

	\subsection{Solution to \eqref{eq:abstract_prob_relaxed} solves \eqref{eq:abstract_prob}\label{subsec:Solution-to}}
	
	As $\xr=\{m \in [0,1]^D : \Vert m \Vert_1 \leq \s\}$ and the constraint on $S_{1}$ are given by linear inequalities,
	we could solve \eqref{eq:abstract_prob_relaxed} directly with a LP solver \cite{jiang_2020,dantzig_1990}. %\footnote{The set $\xr$ can be described by $2D+1$ linear inequalities
	%for $0\leq m_{i}\leq1$ and $\Vert m \Vert_1 = \sum m_{i}\leq\s$. Because $m_{i}\geq0$
	%is given for a LP in standard form, $D+2$ constraints remain, including
	%$S_{1}(m)\leq\k$.} But in order to obtain a closed form solution for $\ma$ and to understand
	%\eqref{eq:abstract_prob_relaxed} better, 
	But in order to show that \eqref{eq:abstract_prob_relaxed} yields a solution to \eqref{eq:abstract_prob}, we will solve the corresponding dual problem. %, \ie the Lagrangian. 
	The Lagrangian
	of \eqref{eq:abstract_prob_relaxed} is given by \cite{boyd_2004}
	\begin{IEEEeqnarray}{c}
		\Lambda(m,\l):=S_{0}(m)+\l\cdot\left(S_{1}(m)-\k\right)\;\label{eq:abstract_lagrangian}
	\end{IEEEeqnarray}
	for $\l \in \r$. With the help of the Lagrangian \eqref{eq:abstract_lagrangian},
	the dual function of problem \eqref{eq:abstract_prob_relaxed} can
	be defined as 
	\begin{IEEEeqnarray}{c}
		g(\l):=\inf_{m\in\xr}\Lambda(m,\l)\;.\label{eq:abstract_abstract_dual_fct}
	\end{IEEEeqnarray}
	With $m \in \xr$ and the linearity of $S_0$ and $S_1$, 
	\begin{IEEEeqnarray}{c}\label{eq:g_lem_n1}
		g(\l)=\sum_{i\in\il}s_{i}^{(0)}+\l\cdot s_{i}^{(1)} - \l\cdot\kappa\;,\;s_{i}^{(k)}:=S_{k}\left(e^{(i)}\right)\;,
	\end{IEEEeqnarray}
	holds for every $\l \geq 0$ and arbitrary 
	\begin{IEEEeqnarray}{c}\label{eq:imin}
		\il \in \s-\argmin\{i \in \SD : s_{i}^{(0)}+\l\cdot s_{i}^{(1)} < 0\}\;.\IEEEeqnarraynumspace	
	\end{IEEEeqnarray}
	The dual function $g$ is needed to solve the dual problem
	\begin{IEEEeqnarray}{c}
		\sup_{\l\geq0}g(\l)\;.\ztag{$\mathcal{D}_1$}\label{eq:abstract_dual_prob}
	\end{IEEEeqnarray}
	%The following Lemma \ref{lem:g} and Theorem \ref{thm:problem_solved}
	%are proven in the Supplementary Material for the general case of $N$
	%control score functions $S_{1},\ldots,S_{N}$. %Additionally, a detailed
	%introduction into the dual function and the dual problem is shown
	%there. 
	\begin{thm}
		\label{thm:problem_solved}Let $\s \in \SD$, $S_0$ and $S_1$ be GSS.
		Further, assume $\k$ is chosen such that Slater's condition holds, meaning a $m\in\interior{\xr}$
		with $S_{1}(m)<\k$ exists. Then the dual problem \eqref{eq:abstract_dual_prob}
		\wrt \eqref{eq:abstract_prob_relaxed} is solved by a $\la \geq 0$. Let $\la$ be an optimal point of the dual function $g$ and $\ila$ corresponding
		indices of active scores, computed according to \eqref{eq:imin}. 
		Then
		\begin{IEEEeqnarray}{c}\label{eq:solution_primal}
			\ma \in \{0,1\}^D \; \text{with} \; \support \ma = \ila
		\end{IEEEeqnarray}
		solves the primal problem \eqref{eq:abstract_prob_relaxed}. 
		Finally, it holds $\ma\in\X\subset\xr$ and
		therefore $\ma$ is also a solution to the un-relaxed problem \eqref{eq:abstract_prob}.
	\end{thm}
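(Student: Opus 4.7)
My plan is to apply convex duality to the relaxed problem and then exploit that the constructed mask happens to be integral. Problem \eqref{eq:abstract_prob_relaxed} is a linear program over the compact polytope $\xr$ with a single affine inequality $S_1(m) \leq \k$, so a primal optimum exists; since both the objective and the constraint are affine and Slater's condition is assumed, strong duality holds, the dual value equals the primal optimum, and the dual optimum is attained. This already yields the existence of a $\la \geq 0$ solving \eqref{eq:abstract_dual_prob}.

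Next I would justify the explicit form \eqref{eq:g_lem_n1} of the dual function. For fixed $\l \geq 0$, the Lagrangian $\Lambda(m,\l) = \sum_{i=1}^{D} m_i \bigl(s_i^{(0)} + \l\, s_i^{(1)}\bigr) - \l\,\k$ is linear in $m$, so minimizing it over the box-sparsity polytope $\xr$ reduces to a greedy selection: set $m_i = 1$ only for indices whose coefficient $s_i^{(0)} + \l\, s_i^{(1)}$ is strictly negative, truncated to the $\s$ most negative such indices. This is exactly $\il$ from \eqref{eq:imin} and produces the claimed formula for $g(\l)$. In particular, the $\ma$ defined in \eqref{eq:solution_primal} via $\support \ma = \ila$ is a minimizer of $\Lambda(\cdot, \la)$ over $\xr$, hence $\Lambda(\ma, \la) = g(\la)$.

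The main obstacle is promoting this particular Lagrangian minimizer to a primal-optimal point of \eqref{eq:abstract_prob_relaxed}, i.e.\ verifying primal feasibility $S_1(\ma) \leq \k$ together with complementary slackness $\la\bigl(S_1(\ma) - \k\bigr) = 0$. I plan to read both off the first-order optimality of $\la$ for the concave piecewise linear function $g$: a maximum of $g$ on $[0,\infty)$ forces $0$ to lie in the superdifferential of $g$ at $\la$ (or in a one-sided variant when $\la = 0$), and each element of that superdifferential is of the form $S_1(m) - \k$ for some minimizer $m$ of $\Lambda(\cdot, \la)$. The subtle point is that when the coefficients $s_i^{(0)} + \la s_i^{(1)}$ have ties, the $\s$-argmin in \eqref{eq:imin} must be resolved compatibly with complementary slackness; arguing that the index family provided by \eqref{eq:imin} admits such a compatible choice is the one non-routine step. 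Once this is settled, strong duality gives $S_0(\ma) = \Lambda(\ma, \la) + \la \k = g(\la)$, matching the primal optimum, so $\ma$ solves \eqref{eq:abstract_prob_relaxed}. Finally, $\ma \in \{0,1\}^D$ with $\Vert \ma \Vert_0 \leq \s$ by construction, so $\ma \in \X \subset \xr$; since $\min_{\X} S_0 \geq \min_{\xr} S_0 = S_0(\ma)$ and $\ma$ realizes this value while lying in $\X$, $\ma$ also solves the un-relaxed problem \eqref{eq:abstract_prob}.
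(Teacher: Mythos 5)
Your proposal is correct and follows essentially the same route as the paper: derive the closed form \eqref{eq:g_lem_n1} of the dual function by greedily minimizing the linear Lagrangian over $\xr$, obtain a dual optimizer $\la\geq 0$ from Slater's condition and the eventual decrease of $g$, use complementary slackness to identify $g(\la)$ with $S_0(\ma)$, and conclude via the integrality of $\ma$. The one step you single out as non-routine --- resolving ties among the coefficients $s_i^{(0)}+\la s_i^{(1)}$ in \eqref{eq:imin} so that the selected Lagrangian minimizer is primal feasible and satisfies $\la\left(S_1(\ma)-\k\right)=0$ --- is precisely the step the paper compresses into the phrase ``strong duality induces complementary slackness,'' so your treatment is, if anything, more explicit than the published proof about where the real work lies.
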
 
	\begin{proof}
		By \eqref{eq:g_lem_n1} and \eqref{eq:imin}, $g$ is continuous. For $\l$ big enough, Slater's condition implies that $g$ is strictly monotonically decreasing. This implicates the existence of a solution $\la \geq 0$ for \eqref{eq:abstract_dual_prob}. Strong duality induces complementary slackness (a) \cite{boyd_2004}. Therefore, for $\ma$ defined as in \eqref{eq:solution_primal}, $\ma \in \xr$ and
		\begin{IEEEeqnarray}{c}
			g(\la) \stackrel{\text{(a)}}{=} \sum_{i\in\ila}s_i^{(0)} = S_0(\ma)
		\end{IEEEeqnarray}
		is true. Meaning that $\ma$ solves \eqref{eq:abstract_prob_relaxed} by strong duality \cite{boyd_2004}. Finally, $\X \subset \xr$ and $\ma \in \X$ thus solves \eqref{eq:abstract_prob}.
	\end{proof}
	
	\subsection{How to choose the constraint $\kappa$\label{subsec:How-to-Choose}}
	\begin{figure}
		\includegraphics[width=\linewidth]{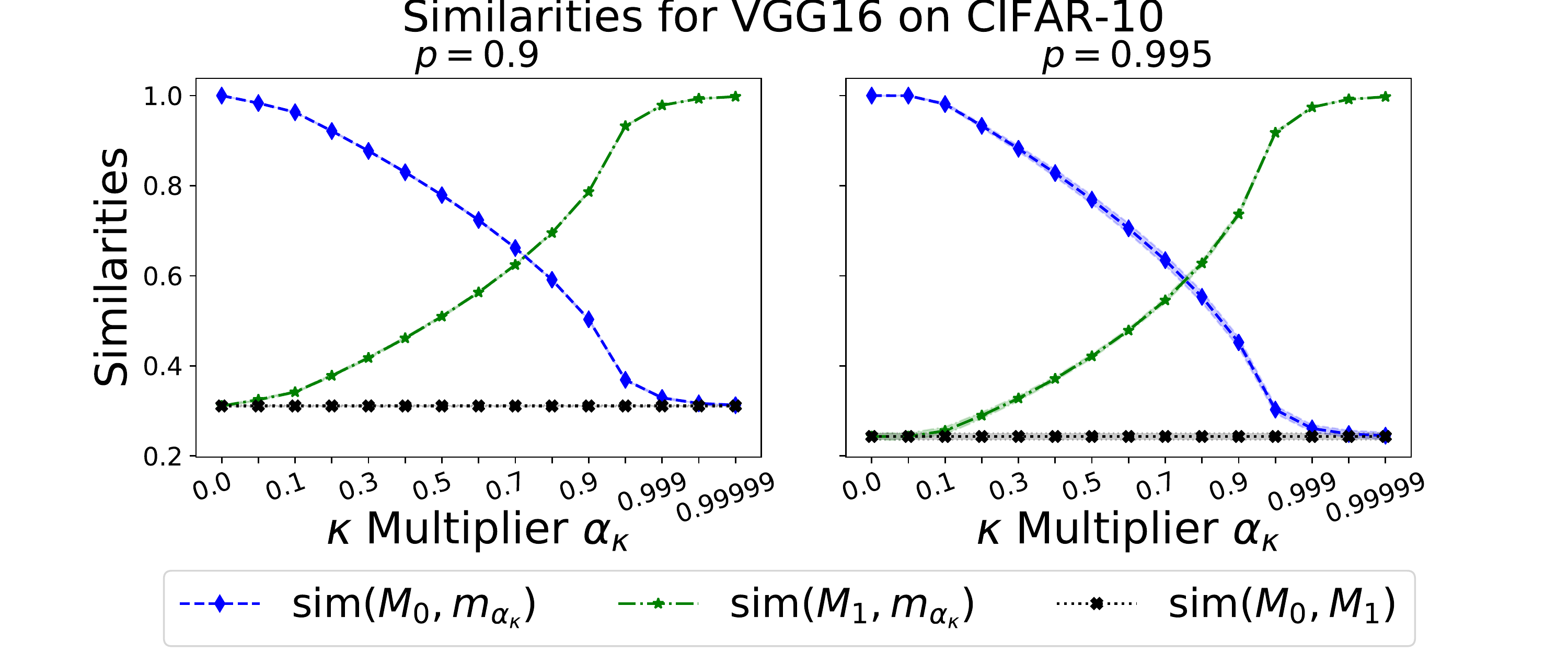}
		
		\caption{\label{fig:mask_sim}Similarity between masks derived by
			G-COPS and the underlying target score SNIP ($\simi(M_{0},m_{\ak})$)
			and control score GraSP ($\simi(M_{1},m_{\ak})$), respectively. SNIP and GraSP are compared by $\simi(M_{0},M_{1})$.
		}
	\end{figure}
	Before we present an
	algorithm solving the dual problem \eqref{eq:abstract_dual_prob}, thus by Theorem \ref{thm:problem_solved} also \eqref{eq:abstract_prob}, numerically, we need to
	discuss the constraint parameter $\kappa$. From now on we assume $\k < 0$, as $\k$ is chosen negative in all experiments considered in Section \ref{sec:Experiments}. Theorem \ref{thm:problem_solved}
	suggests using $\kappa$ such that Slater's condition holds. With $s_i^{(k)} = S_k(e^{(i)})$ and the linearity of $S_k$, $k=0,1$, the constraint $\kappa$ consequently should
	be chosen as 
	\begin{IEEEeqnarray}{c}
		\kappa>\kappa_{\min}:=\sum_{i\in I}s_{i}^{(1)}\;,\; I \in \s-\argmin\{i:s_{i}^{(1)} < 0\}\;.\IEEEeqnarraynumspace\label{eq:kappa_max}
	\end{IEEEeqnarray}
	Obviously, there is no choice of a pruned network with at most $\s$ remaining
	weights having a total $S_{1}$ score lower than $\kappa_{\min}$.
	If $\kappa$ is chosen close to $\kappa_{\min}$, the solution to \eqref{eq:abstract_prob} will be
	restricted strictly towards fulfilling the constraint. Therefore, the resulting
	network will be close to the one obtained by only taking the control scores
	$s_{1}^{(1)},\ldots,s_{D}^{(1)}$ into account. Using a constraint
	$\kappa\gg\kappa_{\min}$ on the other hand leads to less control.
	Accordingly, minimizing the target score function $S_0$ becomes
	more important. Choosing $\kappa\geq\kappa_{\max}$ with 
	\begin{IEEEeqnarray}{c}
		\kappa_{\max}:= \max_{I \in \s-\argmin\{i:s_{i}^{(0)}<0\}} \sum_{i\in I}s_{i}^{(1)}\label{eq:kappa_max_former_min}
	\end{IEEEeqnarray}
	leads to the same solution as the unconstrained problem \eqref{eq:unconstraint_generic}, considering
	only scores $s_{1}^{(0)},\ldots,s_{D}^{(0)}$. 
	
	\figurename{} \ref{fig:mask_sim} shows the similarity between the SNIP mask
	$M_{0}$, the GraSP mask $M_{1}$ and the combined
	G-COPS masks $m_{\alpha_{\k}}$ calculated for varying values for the
	constraint $\kappa:=\alpha_{\k}\cdot\kappa_{\min}$ for a VGG$16$ on CIFAR-$10$. %More similarity plots are provided in the Supplementary Materials. 
	The G-COPS masks
	are obtained by using SNIP as target and GraSP as control score. 
	As $\k_{\min}< 0$ holds, $\k_{\min}<\ak\cdot\k_{\min}$ is true for
	all $\ak\in[0,1)$. The similarity $\simi (m, M)$ between two pruning masks $m,M\in\{0,1\}^D\setminus\{0\}$
	is defined as 
	\begin{IEEEeqnarray}{c}
		\simi(m,M):=\frac{\Vert m\odot M \Vert_{0}}{\max\{\Vert m\Vert_{0},\Vert M \Vert_{0}\}} \in[0,1] \;. \label{eq:simi}
	\end{IEEEeqnarray}
	It is zero if and only if $\support m\cap\support M=\emptyset$
	and one only for $\support m=\support M$. As suspected, \figurename{} \ref{fig:mask_sim}
	shows that $m_{\alpha_{\k}}\to M_{1}$ for $\alpha_{\k}\to1$. Also,
	for small values of $\alpha_{\k}$ it holds $m_{\alpha_{\k}}\to M_{0}$.
	In between, an increasing of the similarity $\simi(M_{1-i},m_{\ak})$
	implies a decreasing of $\simi(M_{i},m_{\ak})$, $i=0,1$. Therefore, $m_{\alpha_{\k}}$
	can be seen as an interpolation between the masks $M_{0}$ and $M_{1}$,
	as desired. We additionally observe $\simi(M_{0},M_{1})>0$, \ie
	some indices are chosen for both scores. By construction of the G-COPS mask, these similar weights are also chosen for $m_{\alpha_{\k}}$
	for all $\ak<1$. The similarity between SNIP and GraSP decreases
	with an increasing pruning rate (gray line in \figurename{} \ref{fig:mask_sim}
	left plot $p=0.9$ compared to right plot $p=0.995$). Thus, the most important weights for SNIP
	are less important for GraSP and vice-versa, explaining
	SNIP's loss of gradient flow for high pruning rates. 
	
	\subsection{Algorithm to compute COPS mask\label{subsec:Algorithm-to-Solve}}
	\newcommand{\Break}{\State \textbf{break} }
	\begin{algorithm}
		
		\caption{COPS}
		\label{alg:cops}
		\begin{algorithmic}[1]
			
			\Require{Sparsity constraint $\s \in \{1,\ldots,D\}$, $\alpha_\kappa \in [0, 1)$, target scores $s_1^{(0)},\ldots, s_D^{(0)}$, control scores $s_1^{(1)},\ldots, s_D^{(1)}$, precision $\delta > 0$, upper bound $\lmax > 0$ with $g^\prime (\lmax ) \leq 0$}
			
			\Statex
			\State Set constraint $\k = \ak \cdot \k_{\min}$ \Comment{$\k_{\min}$ according to \eqref{eq:kappa_max}}
			\State Calculate $I_{0}$ as in \eqref{eq:imin} \Comment{$I_0$ gives $g(0)$ and $g^\prime(0)$ via \eqref{eq:g_lem_n1}, \eqref{eq:g_prime}}
			\If{$g^\prime (0) \leq 0$}\Comment{$g$ is monotonically decreasing for $\l \geq 0$}
			\State $\la \leftarrow 0$%, $\ila \leftarrow I_0$
			\Else \Comment{$\la \geq 0$, $\lmin=0$ is lower bound}
			\State Calculate $I_{\lmax}$ \Comment{$I_{\lmax}$ gives $g({\lmax})$ and $g^\prime({\lmax})$}
			\State $\lmin^{(0)} \leftarrow 0$, $\lmax^{(0)} \leftarrow \lmax$, $k \leftarrow 0$ \Comment{$k$ counts steps}
			
			\Repeat \Comment{Bisection}
			\State $\l_0 \leftarrow \frac{\lmin^{(k)} + \lmax^{(k)}}{2}$\Comment{Midpoint $\l_0$}
			\State Calculate $I_{\l_0}$ \Comment{$I_{\l_0}$ gives $g({\l_0})$ and $g^\prime({\l_0})$}
			\If{$g^\prime (\l_0) = 0$}\Comment{$g(\l_0)$ is optimal}
			\State $\la = \l_0$
			\Break
			\ElsIf{$g^\prime (\l_0) > 0$} \Comment{$\la \geq \l_0$}
			\State $\lmin^{(k)} \leftarrow \l_0$,  $\lmax^{(k)} \leftarrow \lmax^{(k-1)}$
			\Else \Comment{$\la \leq \l_0$}
			\State $\lmax^{(k)} \leftarrow \l_0$, $\lmin^{(k)} \leftarrow \lmin^{(k-1)}$
			\EndIf
			\State $k \leftarrow k+1$\Comment{Old $k$ now $k-1$}
			\Until{$\vert g(\lmax^{(k-1)}) - g(\lmin^{(k-1)}) \vert \leq \delta$} \Comment{Up to precision $\delta$}\label{lst:line:break}
			\State $\la \leftarrow \l_0$%, $\ila \leftarrow I_{\l_{0}}$
			\EndIf
			\State \Return Optimal mask $\ma \in \{0,1\}^D$ with $\support{\ma} = I_{\la}$% $\ma_i = 1$ if and only if $i \in I_{\la}$
		\end{algorithmic}
	\end{algorithm}
	\begin{figure}
		\centering
		\begin{tikzpicture}[scale=.75]
		\begin{scope}[scale=0.95, every node/.append style={transform shape}]
		% axes   
		\draw[->] (-.5, 0) -- (5, 0) node[above] {\scriptsize$\lambda$}; 
		\draw[->] (0, -1.5) -- (0, 1) node[above] {$y$};

		% pi_1   
		\draw[scale=1., domain=0:3, smooth, variable=\x, gray!35, dashed]  plot ({\x}, {.75 - .25 * \x});   
		\draw[scale=1., domain=3:4, smooth, variable=\x, black, dashed]  plot ({\x}, {.75 - .25 * \x});   
		\draw[scale=1., domain=4:5, smooth, variable=\x, red]  plot ({\x}, {.75 - .25 * \x});   
		\node at (-.2, .75) {\tiny{$\pi_1$}};     
		
		% pi_2   
		\draw[scale=1., domain=0:5, smooth, variable=\x, gray!35, dashed] plot ({\x}, {.375 + .125 * \x});
		\node at (-.2, .375) {\tiny{$\pi_2$}};
		%\draw[scale=1., domain=2:2.5, smooth, dotted, variable=\x, black] plot ({\x}, {.25 + .25 * \x});   
		
		% pi_3   
		\draw[scale=1., domain=0:1, smooth, variable=\x, gray!35, dashed]  plot ({\x}, {.125 -.125 * \x});   
		\draw[scale=1., domain=1:1.8, smooth, variable=\x, black, dashed]  plot ({\x}, {.125 -.125 * \x});   
		\draw[scale=1., domain=1.8:5, smooth, variable=\x, red]  plot ({\x}, {.125 -.125 * \x});   
		\node at (-.2, .125) {\tiny{$\pi_3$}};   
		%\draw[scale=1., domain=0:5, smooth, variable=\x, black, dashed]  plot ({\x}, {-.25});   
		
		% pi_4   
		\draw[scale=1., domain=0:4, smooth, variable=\x, red]  plot ({\x}, {- .25});   
		\draw[scale=1., domain=4:5, smooth, variable=\x, black, dashed]  plot ({\x}, {-.25});   
		\node at (-.2, -.25) {\tiny{$\pi_4$}};   
		
		% pi_5   
		\draw[scale=1., domain=0:1.8, smooth, variable=\x, red]  plot ({\x}, {-1. + .5 * \x});   
		\draw[scale=1., domain=1.8:2, smooth, variable=\x, black, dashed]  plot ({\x}, {-1. + .5 * \x});  
		\draw[scale=1., domain=2:4, smooth, variable=\x, gray!35, dashed]  plot ({\x}, {-1. + .5 * \x});   
		\node at (-.2, -1.) {\tiny{$\pi_5$}};   
		%\draw[scale=1., domain=3.5:3.75, smooth, dotted, variable=\x, black]  plot ({\x}, {-1. + .5 * \x});  
		
		% g(\l)   
		\draw[scale=1., domain=0:1.8, smooth, variable=\x, thick]  plot ({\x}, {-1.25 + .5 * \x});   
		\draw[scale=1., domain=1.8:4, smooth, variable=\x, thick]  plot ({\x}, {-.125 - .125 * \x});   
		\draw[scale=1., domain=4:5, smooth, variable=\x, thick]  plot ({\x}, {.875 -.375 * \x});   
		\node[thick] at (-.2, -1.375) {$g$};
		
		% show \lambda^\ast 
		\node[circle,fill=blue,inner sep=0pt,minimum size=2pt,label=below:{\color{blue}\tiny $\lambda^\ast$}] at (1.8, -.35){};  
		
		% show \max \mathcal{L}_\times    
		%\node[circle,fill=black,inner sep=0pt,minimum size=2pt,label=below:{\scalebox{.5}{$\max \mathcal{L}_{\times}$}}] at (5, -.5){};   
		
		\end{scope}
		
		\begin{scope}[xshift=.3*\textwidth, yshift=1.5cm, scale=0.95, every node/.append style={transform shape}]
		% variables 	
		\newcommand\y{1.5} 	
		\newcommand\ytwo{2.} 	
		\newcommand\xshift{2}
		
		% step 1   
		\node[] at (2.5, 0) {\tiny Step $1$};   
		\draw[scale=1., domain=0:1.8, smooth, variable=\x, ]  plot ({\x}, {-1.25 + .5 * \x});   
		\draw[scale=1., domain=1.8:4, smooth, variable=\x, ]  plot ({\x}, {-.125 - .125 * \x});   
		\draw[scale=1., domain=4:5, smooth, variable=\x, ]  plot ({\x}, {.875 -.375 * \x});   
		\node[] at (-.125, -1.25) {\scalebox{.5}{$g$}};   
		\node[circle,fill=blue,inner sep=0pt,minimum size=2pt,label=below:{\color{blue} \tiny $\lambda^\ast$}] at (1.8, -.35){};       
		\node[circle,fill=black,inner sep=0pt,minimum size=2pt,label=below:{\scalebox{.5}{$\lambda_+^{(0)}$}}] at (5, -1.){};    
		\node[circle,fill=black,inner sep=0pt,minimum size=2pt,label=below:{\scalebox{.5}{$\lambda_-^{(0)}$}}] at (0., -1.25){};    
		\node[circle,fill=black,inner sep=0pt,minimum size=2pt,label=below:{\scalebox{.5}{$\lambda_0^{(0)}\leftarrow \lambda_+^{(1)}$}}] at (2.5, -.4375) (a) {};    
		\node[below =7.5pt of a] {\scalebox{.5}{as $g^\prime (\lambda_0^{(0)}) < 0$}};

		% step 2 	
		\node[] at (1.25, - \y - .25) {\tiny Step $2$}; 	
		\draw[scale=1., domain=0:1.8, smooth, variable=\x, ]  plot ({\x}, {-1.25 - \y + .5 * \x}); 	
		\draw[scale=1., domain=1.8:2.5, smooth, variable=\x, ]  plot ({\x}, {-.125 - \y - .125 * \x}); 	
		%\draw[scale=1., domain=4:5, smooth, variable=\x, red, dotted]  plot ({\x}, {.875 -.375 * \x}); 	
		\node[] at (-.125, -1.25 - \y) {\scalebox{.5}{$g$}}; 	
		\node[circle,fill=blue,inner sep=0pt,minimum size=2pt,label=below:{\color{blue} 
			\tiny $\lambda^\ast$}] at (1.8, -.35 - \y){}; 
		\node[circle,fill=black,inner sep=0pt,minimum size=2pt,label=below:{\scalebox{.5}{$\lambda_-^{(1)}$}}] at (0., -1.25 - \y){}; 	
		\node[circle,fill=black,inner sep=0pt,minimum size=2pt,label=below:{\scalebox{.5}{$\lambda_+^{(1)}$}}] at (2.5, -.4375- \y) (a) {};    
		\node[circle,fill=black,inner sep=0pt,minimum size=2pt,label=below:{\scalebox{.5}{$\lambda_0^{(1)}\leftarrow \lambda_-^{(1)}$}}] at (1.25, -.625 - \y) (a) {}; \node[below =7.5pt of a] {\scalebox{.5}{as $g^\prime (\lambda_0^{(1)}) > 0$}};

		% step 3 	
		\node[] at (3.625, - \y - .5) {\tiny Step $3$}; 	
		\draw[scale=1., domain=1.25:1.8, smooth, variable=\x, ]  plot ({\x + \xshift}, {-1.25 - \ytwo + .5 * \x}); 	
		\draw[scale=1., domain=1.8:2.5, smooth, variable=\x, ]  plot ({\x + \xshift}, {-.125 - \ytwo - .125 * \x}); 	
		%\draw[scale=1., domain=4:5, smooth, variable=\x, red, dotted]  plot ({\x}, {.875 -.375 * \x}); 	
		\node[] at (-.125 + \xshift + 1.25, -1.25 - \ytwo + .5 * 1.25) {\scalebox{.5}{$g$}}; 	
		\node[circle,fill=blue,inner sep=0pt,minimum size=2pt,label=below:{\color{blue} \tiny $\lambda^\ast$}] at (1.8 + \xshift, -.35 - \ytwo){}; 
		
		\node[circle,fill=black,inner sep=0pt,minimum size=2pt,label=below:{\scalebox{.5}{$\lambda_-^{(2)}$}}] at (1.25 + \xshift, -.625 - \ytwo) {}; \node[circle,fill=black,inner sep=0pt,minimum size=2pt,label=below:{\scalebox{.5}{$\lambda_+^{(2)}$}}] at (2.5 + \xshift, -.4375- \ytwo) (a) {}; \node[circle,fill=black,inner sep=0pt,minimum size=2pt] at (1.875 + \xshift, -.359375 - \ytwo) (a) {}; 	
		\node[above right =-3.5pt and -1.5pt of a] (b) {\scalebox{.5}{$\lambda_0^{(3)} \leftarrow \lambda_+^{(3)}$}}; 	
		%\draw[->] (b)--(a) ; 
		
		% and so on...
		\node at (5, -\y -1){$\ldots$}; 
		\end{scope}	
		\end{tikzpicture}
		\par
		\caption{\label{fig:g_graphical}Left: Graphical example of $g$ and $\protect\la$
			with sparsity $\protect\s=2$ and underlying
			$\pi_{i}(\l) = s_i^{(0)} + \l \cdot s_i^{(1)}$. For clearness, $\kappa=0$ is used. Active $\pi_{i}(\protect\l)$ are colored {\color{red}{red}}.
			{\color{gray!35}{Gray}} coloring of $\pi_{i}(\protect\l)$ for a
			given $\protect\l$ highlights $\pi_i(\l) \geq 0$.
			Right: Shows first three steps of Algorithm \ref{alg:cops} to obtain
			a solution $\protect\la$.}
	\end{figure}
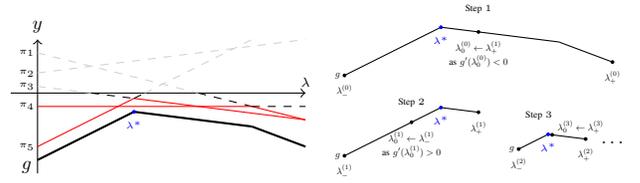
	As pointwise infimum over affine functions, $g$ is concave \cite{boyd_2004}. A so called \emph{superderivative} of $g$ can be computed for every $\l\geq 0$ by differentiating \eqref{eq:g_lem_n1} with $\il$ fixed, leading to
	\begin{IEEEeqnarray}{c}
		g^{\prime}(\l) = \sum_{i\in\il} s_i^{(1)} - \k\;.\label{eq:g_prime}
	\end{IEEEeqnarray}
	It holds by concavity of $g$ for $\l \geq 0$
	\begin{enumerate}
		\item $g^{\prime}(\l)>0$ (for one superderivative) implies $\la\geq\l$,
		\item $g^{\prime}(\l)<0$ (for one superderivative) implies $\la \leq \l$,
		%\item $g^{\prime}(\l)=0$ (for one superderivative) implies $\la = \l$,
	\end{enumerate} 
	for all solutions $\la$ of \eqref{eq:abstract_dual_prob}. Using these implications, Algorithm \ref{alg:cops} leads to a simple method for approximating $\la$ via bisection of intervals. By Theorem \ref{thm:problem_solved}, $\la$ induces a corresponding COPS mask $\ma$. 
	For applying Algorithm \ref{alg:cops}, a proper value $\lmax$ with $g^{\prime} (\lmax) \leq 0$ has to be found.
	One possibility is to start with $\lmax=1$. If $g^{\prime}(\lmax)\geq0$,
	$\lmax$ can be doubled iteratively. A graphical illustration for Algorithm \ref{alg:cops} is given in \figurename{} \ref{fig:g_graphical}. 
	
	\subsection{Convergence and complexity of Algorithm \ref{alg:cops}}\label{subsec:converg_compl}
	Algorithm \ref{alg:cops} has linear order of convergence due to the used bisection method. By \eqref{eq:imin}, $g(\l)$ and $g^\prime(\l)$ can be computed in $\mathcal{O}(D\cdot \log D)$ by sorting $\{s_i^{(0)} + \l s_i^{(1)}\}$.
	This together leads to a deterministic complexity $\mathcal{O} \left( D\cdot \log D \cdot  \log \frac{1}{\delta} \right)$ of Algorithm \ref{alg:cops} for an arbitrary sparsity $1\leq \sigma \leq D$ and precision $\delta > 0$. Precision $\delta$ guarantees $\vert g( \la_{COPS}) - g(\la) \vert \leq \delta$ for $\la_{COPS}$, the solution found by COPS, and an exact solution $\la$ of the dual problem. Using \emph{faster dynamic matrix inverses} (FDMI) \cite{jiang_2020}, the best expected runtime for solving a LP
	is given by $\mathcal{O}(D^{2.055}\cdot \log \frac{D}{\delta})$ which is best to our knowledge
	the fastest method for solving a general LP. %But different to Algorithm \ref{alg:cops}, FDMI allows arbitrary, dense constraint matrices. 
	With $D\cdot \log D \ll D^{2.055}$ we see that COPS has drastically smaller complexity than the best standard LP solver FDMI. This speed up is achieved since Algorithm \ref{alg:cops} does not invert matrices during optimization, but only sorts scores. The acceleration induced by COPS compared to using standard LP solvers is small in relation to the overall training time for the setup of one-shot pruning before training---usually pruning takes minutes whereas training lasts hours. But if iterative pruning or dynamic sparse training is used, the LP will be solved more often. Thus, COPS' speed up will be multiplied by the number of score evaluations. %Thus, using the relatively slow but stable bisection method still outperforms the LP solver FDMI. Additionally, COPS' complexity is not expected, but
	%deterministic.
	
	%-------------------------------------------------------------------------
	\section{Experiments and discussions\label{sec:Experiments}}
	\begin{table*}[htb!]
		\centering
		\caption{\label{tab:Experimental-setups-for}Experimental setups with references to used training and hyperparameter setup given in brackets in the top row.}
		\begin{tabular}{cccccc}
			{\scriptsize{}} & {\scriptsize{}MNIST (\cite{lee_2018})} & {\scriptsize{}CIFAR-$10$ (\cite{lee_2018})} & {\scriptsize{}CIFAR-$100$ (\cite{tanaka_2020})} & {\scriptsize{}Tiny ImageNet/ResNet$18$ (\cite{tanaka_2020})}& {\scriptsize{}Tiny ImageNet/WRN$18$-$2$ (\cite{tanaka_2020})}\tabularnewline
			\hline 
			{\scriptsize{}$\#$Epochs} & {\scriptsize{}$250$} & {\scriptsize{}$250$} & {\scriptsize{}$160$} & {\scriptsize{}$100$} & {\scriptsize{}$100$}\tabularnewline
			{\scriptsize{}Batch Size} & {\scriptsize{}$100$} & {\scriptsize{}$128$} & {\scriptsize{}$128$} & {\scriptsize{}$128$} & {\scriptsize{}$128$}\tabularnewline
			{\scriptsize{}Optimizer: SGD-} & {\scriptsize{}Momentum $0.9$} & {\scriptsize{}Momentum $0.9$} & {\scriptsize{}Momentum $0.9$} & {\scriptsize{}Momentum $0.9$}& {\scriptsize{}Momentum $0.9$}\tabularnewline
			{\scriptsize{}Learning Rate} & {\scriptsize{}$0.1$} & {\scriptsize{}$0.1$} & {\scriptsize{}$0.01$} & {\scriptsize{}$0.1$}& {\scriptsize{}$0.1$} \tabularnewline
			{\scriptsize{}LR Decay} & {\scriptsize{}$\times0.1$} & {\scriptsize{}$\times0.1$} & {\scriptsize{}$\times0.2$} & {\scriptsize{}$\times0.1$} & {\scriptsize{}$\times0.1$}\tabularnewline
			& {\scriptsize{}every $25$k iterations} & {\scriptsize{}every $30$k iterations} & {\scriptsize{}epochs $60/120$} & {\scriptsize{}epochs $30/60/80$} & {\scriptsize{}epochs $30/60/80$}\tabularnewline
			{\scriptsize{}Weight Decay} & {\scriptsize{}$5\cdot10^{-4}$} & {\scriptsize{}$5\cdot10^{-4}$} & {\scriptsize{}$5\cdot10^{-4}$} & {\scriptsize{}$10^{-4}$} & {\scriptsize{}$10^{-4}$}\tabularnewline
			{\scriptsize{}Initialization} & {\scriptsize{}Glorot \cite{xavier_2010}} & {\scriptsize{}He \texttt{fan\_in} \cite{he_2016}} & {\scriptsize{}He \texttt{fan\_in} \cite{he_2016}} & {\scriptsize{}He \texttt{fan\_in} \cite{he_2016}} & {\scriptsize{}He \texttt{fan\_in} \cite{he_2016}}\tabularnewline
		\end{tabular}	
	\end{table*}
	In this section we present and discuss experimental results for pruning one-shot before training via COPS. % on
	%different datasets and DNN architectures.
	To show the superior performance of combining SNIP and GraSP via G-COPS, compared to their individual performance, we evaluate them on the image classification tasks CIFAR-$10$, CIFAR-$100$ \cite{krizhevsky_2012}
	and Tiny ImageNet \cite{tiny_imagenet}. The tested DNN architectures are VGG$16$ \cite{simonyan_2014}, ResNet$18$ \cite{he_2016} and Wide-ResNet$18$\hbox{-}$2$ (WRN$18$-$2$) \cite{zagoruyko_2016}. SNIP and GraSP have shown to be competitive with iterative pruning before training and pruning methods applied during training or after pre-training \cite{lee_2018,tanaka_2020,wang_2020}. 
	To underline that COPS is not restricted to SNIP/GraSP, we further control SNIP with magnitude scores (M-COPS) and combine SNIP with an unsupervised version of SNIP (USNIP) as a control score, called U-COPS. The U-COPS experiment is conducted on a LeNet-$5$-Caffe \cite{lecun_1998}, trained on MNIST \cite{lecun_1998}.
	
	Inspired by the interpolation of G-COPS between SNIP and GraSP, shown
	in \figurename{} \ref{fig:mask_sim}, a grid-like search for the multiplier
	$\ak$ for $\k_{\min}$ is used. The considered values are $\ak\in\{0.05,0.1,0.2,\ldots,0.9\}$
	together with a more fine-grained search close to $1$ via $\ak\in\{0.99,0.999,0.9999,0.99999\}$.
	Values $\ak$ close to one become more interesting for higher pruning
	rates, as SNIP tends to generate architectures with a weak gradient
	signal. Thus, a stricter control regarding gradient flow will lead to better results in these cases. We use the same grid-like search also for M-COPS and U-COPS.  To achieve fair comparisons, reported $\ak$ in Tables \ref{tab:results_cifar10} and \ref{tab:results_bigger} are chosen as those with the best mean validation accuracy at early stopping time among all tested $\ak$. 
	
	\subsection{Experimental setup}\label{subsec:experimental_setup}
	For the experiments presented in this work we used the deep learning framework PyTorch$1.5$
	\cite{pytorch_ieee}. All experiments were run on a single Nvidia GeForce
	$1080$ti GPU.
	
	As common in the literature of pruning before training starts \cite{lee_2018,tanaka_2020,wang_2020,lee_2019,verdenius_2020},
	only weights were pruned. All biases and Batch Normalization
	parameters were kept trainable. Therefore, the pruning rate $p$ only considers weights. 
	Like other pruning methods applied before training \cite{lee_2018,tanaka_2020,wang_2020,lee_2019,verdenius_2020},
	also COPS is evaluated on a fixed training setup. Therefore, no hyperparameter optimization was executed, except for $\ak$. The training setup and hyperparameters together with references to the used network architectures are given in Table \ref{tab:Experimental-setups-for}.
	
	We performed five runs for each reported result for MNIST and CIFAR and three for Tiny ImageNet. Each run was based on a different random seed for weight initialization and ordering of the data.
	The common data augmentations for CIFAR and Tiny ImageNet were used, with code based on\footnote{\url{https://github.com/alecwangcq/GraSP}}\addtocounter{footnote}{-1}\addtocounter{Hfootnote}{-1}. The training data for MNIST and CIFAR was split randomly
	$9/1$ for training and validation for each random seed. Results are reported for the early
	stopping epoch, the epoch with the highest validation accuracy. 
	
	The considered pruning scores are i.i.d. $\mathcal{U}(-1,0)$ random scores, magnitude scores $-\vert \Theta_{i}\vert$,
	GraSP{\footnotemark} \cite{wang_2020}, SNIP\footnote{PyTorch adaptation of \url{https://github.com/namhoonlee/snip-public}}\addtocounter{footnote}{-1}\addtocounter{Hfootnote}{-1} \cite{lee_2018} and USNIP{\footnotemark} \cite{lee_2019}. The determination of SNIP and GraSP scores is discussed in Section \ref{subsec:problem}. The USNIP method will be introduced in Section \ref{subsec:ucops}. %Contrarily to
	%common notation for the calculation of pruning scores, we minimize the
	%scores to stick with standard formulations of convex optimization. 
	Except random and magnitude based pruning, all considered pruning methods are data dependent.
	To obtain a sufficient statistic of the training data, we used the data of $100$
	randomly chosen training batches to calculate the SNIP, GraSP and USNIP scores. 
	%\emph{Calculation of the pruning mask for a single score:}
	For a pruning method with a single score function, a pruning mask with sparsity $\s$ is computed as a solution to \eqref{eq:unconstraint_generic}. 
	%\emph{Computation of COPS masks:} 
	For computing the COPS mask,
	we implemented Algorithm \ref{alg:cops} with precision $\delta=10^{-6}$. To obtain scores
	in the same range, the target and control scores were normalized to have $\sum_{i=1}^D \vert s_i^{(k)} \vert = 1$, $k=0,1$.
	
	\subsection{Experimental results\label{subsec:Experimental-Results}}
	For CIFAR-$10$, we tested two different versions of COPS, G-COPS and M-COPS. The results can be seen in Table \ref{tab:results_cifar10}. Using GraSP as control score leads to similar results for moderate pruning rates, but better results for higher rates, than controlling the weights' magnitudes. Therefore, we only analyze G-COPS for the bigger datasets CIFAR-$100$ and Tiny ImageNet, and solely discuss G-COPS in the following paragraphs. Still, controlling magnitudes improves SNIP's test accuracy by more than $1.3\%$ for $p=0.99$. Remarkably, pruning solely based on magnitudes does not even supply trainable sparse networks for this pruning rate at all. 
	
	\begin{table}[tb]
		\centering	
		\caption{\label{tab:results_cifar10}Experiments on CIFAR-$10$ with VGG$16$.} 	
		\begin{tabular}{@{}lcccc@{}}
			%\toprule
			%& \multicolumn{4}{c}{CIFAR-$10$/VGG$16$}\tabularnewline
			{Method} & {\scriptsize{}$p=0.9$} & {\scriptsize{}$p=0.95$} &{\scriptsize{}$p=0.99$} & {\scriptsize{}$p=0.995$}\tabularnewline
			\hline
			Baseline & \multicolumn{4}{c}{{\scriptsize{}$93.41\pm0.07\%$}}\tabularnewline
			Random & {\scriptsize{}$91.21\%$} & {\scriptsize{}$88.83\%$} & {\scriptsize{}$81.70\%$} & {\scriptsize{}$66.59\%$}\tabularnewline
			SNIP & {\scriptsize{}$\underline{93.13\%}$} & {\scriptsize{}\underline{$92.63\%$}} & {\scriptsize{}$86.34\%$} & {\scriptsize{}$10.00\%$}\tabularnewline
			GraSP & {\scriptsize{}$92.64\%$} & {\scriptsize{}$91.94\%$} & {\scriptsize{}$\underline{88.84\%}$} & {\scriptsize{}$\underline{76.82\%}$}\tabularnewline
			Magnitude & {\scriptsize{}$92.94\%$} & {\scriptsize{}$92.16\%$} & {\scriptsize{}$10.00\%$} & {\scriptsize{}$10.00\%$}\tabularnewline
			\hline 
			G-COPS & {\scriptsize{}${\bf 93.15\%}$} & {\scriptsize{}${92.66\%}$} & {\scriptsize{}${\bf 89.12\%}$} & {\scriptsize{}${\bf 82.67\%}$}\tabularnewline
			($\ak$) & {\scriptsize{}($0.1$)} & {\scriptsize{}($0.05$)} & {\scriptsize{}($0.99999$)} & {\scriptsize{}($0.9999$)}\tabularnewline
			\hline
			M-COPS  & {\scriptsize{}${\bf 93.15\%}$} & {\scriptsize{}${\bf 92.68\%}$} & {\scriptsize{}$87.71\%$} & {\scriptsize{}$22.30\%$}\tabularnewline
			($\ak$) & {\scriptsize{}($0.1$)} & {\scriptsize{}($0.3$)} & {\scriptsize{}($0.5$)} & {\scriptsize{}($0.4$)}\tabularnewline
			%\bottomrule
		\end{tabular}
	\end{table}
	
	\begin{table*}[tb]
		\centering
		\caption{\label{tab:results_bigger}Comparison of G-COPS ($m_{\ak}$), SNIP ($M_0$) and GraSP ($M_1$); $\Delta$ denotes the accuracy difference to G-COPS' result.}
		\begin{tabular}{@{}l@{\hskip 25pt}ccc@{\hskip 25pt}ccc@{\hskip 25pt}ccc@{}}
			%\toprule
			& \multicolumn{3}{c@{\hskip 25pt}}{CIFAR-$100$/ResNet$18$} & \multicolumn{3}{c@{\hskip 25pt}}{Tiny ImageNet/ResNet$18$} & \multicolumn{3}{c@{}}{Tiny ImageNet/WRN$18$-$2$}\tabularnewline
			Method & {\scriptsize{}$p=0.85$} & {\scriptsize{}$p=0.95$} & {\scriptsize{}$p=0.99$} & {\scriptsize{}$p=0.85$} & {\scriptsize{}$p=0.95$} & {\scriptsize{}$p=0.99$} & {\scriptsize{}$p=0.85$} & {\scriptsize{}$p=0.95$} & {\scriptsize{}$p=0.99$}\tabularnewline
			\hline
			Baseline & \multicolumn{3}{c@{\hskip 25pt}}{{\scriptsize{}$74.53\pm0.33\%$}} & \multicolumn{3}{c@{\hskip 25pt}}{{\scriptsize{}$55.97\pm0.37\%$}} & \multicolumn{3}{c@{}}{{\scriptsize{}$59.38\pm0.25\%$}}\tabularnewline
			Random & {\scriptsize{}$70.71\%$} & {\scriptsize{}$64.54\%$} & {\scriptsize{}$50.70\%$} & {\scriptsize{}$52.59\%$} & {\scriptsize{}$46.96\%$} & {\scriptsize{}$33.87\%$} & {\scriptsize{}$56.52\%$} & {\scriptsize{}$52.94\%$} & {\scriptsize{}$42.63\%$}\tabularnewline
			\hline
			SNIP & {\scriptsize{}$\underline{71.61\%}$} & {\scriptsize{}$68.39\%$} & {\scriptsize{}$56.69\%$} & {\scriptsize{}$\underline{56.10\%}$} & {\scriptsize{}$\underline{54.50\%}$} & {\scriptsize{}$37.00\%$} & {\scriptsize{}$\underline{58.73\%}$} & {\scriptsize{}$\underline{57.71\%}$} & {\scriptsize{}$51.07\%$}\tabularnewline
			$\Delta$ & {\scriptsize{}${\bf -0.06\%}$} & {\scriptsize{}${\bf -0.13\%}$} & {\scriptsize{}${\bf -2.56\%}$} & {\scriptsize{}${\bf -0.06\%}$} & {\scriptsize{}${\bf-0.25\%}$} & {\scriptsize{}${\bf -7.27\%}$} & {\scriptsize{}${\bf -0.03\%}$} & {\scriptsize{}${\bf -0.06 \%}$} & {\scriptsize{}${\bf -1.94 \%}$}\tabularnewline
			$\simi(M_0, m_{\ak})$& {\scriptsize{}$0.988$} & {\scriptsize{}$0.938$} & {\scriptsize{}$0.302$} & {\scriptsize{}$0.985$} & {\scriptsize{}$0.990$} & {\scriptsize{}$0.277$} & {\scriptsize{}$0.982$} & {\scriptsize{}$0.981$} & {\scriptsize{}$0.271$}\tabularnewline
			\hline 
			GraSP & {\scriptsize{}$71.16\%$} & {\scriptsize{}$\underline{68.40\%}$} & {\scriptsize{}$\underline{58.67\%}$} & {\scriptsize{}$54.77\%$} & {\scriptsize{}$52.83\%$} & {\scriptsize{}$\underline{43.81\%}$} & {\scriptsize{}$56.87\%$} & {\scriptsize{}$57.22\%$} & {\scriptsize{}$\underline{52.69\%}$}\tabularnewline
			$\Delta$ & {\scriptsize{}${\bf -0.51\%}$} & {\scriptsize{}${\bf -0.12\%}$} & {\scriptsize{}${\bf -0.58\%}$} & {\scriptsize{}${\bf -1.39\%}$} & {\scriptsize{}${\bf-1.92\%}$} & {\scriptsize{}${\bf -0.46\%}$} & {\scriptsize{}${\bf -1.89\%}$} & {\scriptsize{}${\bf -0.55 \%}$} & {\scriptsize{}${\bf - 0.32\%}$}\tabularnewline
			$\simi(M_1, m_{\ak})$ & {\scriptsize{}$0.351$} & {\scriptsize{}$0.350$} & {\scriptsize{}$0.994$} & {\scriptsize{}$0.343$} & {\scriptsize{}$0.306$} & {\scriptsize{}$0.998$} & {\scriptsize{}$0.355$} & {\scriptsize{}$0.303$} & {\scriptsize{}$0.994$}\tabularnewline
			\hline
			G-COPS & {\scriptsize{}${\bf 71.67\%}$} & {\scriptsize{}${\bf 68.52\%}$} & {\scriptsize{}${\bf 59.25\%}$} & {\scriptsize{}${\bf 56.16\%}$} & {\scriptsize{}${\bf 54.75 \%}$} & {\scriptsize{}${\bf 44.27\%}$} & {\scriptsize{}${\bf 58.76\%}$} & {\scriptsize{}${\bf 57.77}\%$} & {\scriptsize{}${\bf 53.01\%}$}\tabularnewline
			($\ak$) & {\scriptsize{}($0.05$)} & {\scriptsize{}($0.2$)} & {\scriptsize{}($0.9999$)} & {\scriptsize{}($0.05$)} & {\scriptsize{}($0.05$)} & {\scriptsize{}($0.99999$)} & {\scriptsize{}($0.05$)} & {\scriptsize{}($0.05$)} & {\scriptsize{}($0.9999$)}\tabularnewline
			%	\bottomrule
		\end{tabular}
		
	\end{table*}
	
	Tables \ref{tab:results_cifar10} and \ref{tab:results_bigger}
	compare results for G-COPS, GraSP and SNIP for different pruning rates $p$,
	classification tasks and network architectures. 
	For every task and pruning rate, the best performing G-COPS has better accuracy than SNIP. Using more moderate pruning
	rates $p\leq 0.95$, SNIP and the best G-COPS results lie close together. The gradient flow for SNIP pruned networks is sufficient for these rates, as shown in \figurename{}~\ref{fig:grasp_vs_snip} for CIFAR-$10$. Thus, controlling SNIP's gradient flow only helps marginally. Applying SNIP
	without constraint seems to be the most economic choice for moderate
	pruning rates, since $\ak$ has not to be optimized as hyperparameter. For all models, GraSP
	and random pruning is outperformed by both, G-COPS and SNIP, for $p\leq 0.95$. %We want to highlight that we only did a grid search
	%for $\ak$. %Using more sophisticated hyperparameter searches for the scaling
	%factor $\ak$, as \cite{falkner_2018}, might improve G-COPS even further.
	
	As predicted, bigger
	$\ak$ achieve the best results for G-COPS with higher pruning rates.
	For these rates we see G-COPS' advantage of not
	being bounded to one pruning score. Using only gradient flow as a
	score (GraSP) provides better results than using only SNIP scores.
	Here, not taking the gradient flow into account might
	result in architectures which do not train at all, like SNIP for $p=0.995$ in Table \ref{tab:results_cifar10}. But, looking only at
	the gradient signal can be improved further by additionally optimizing for the
	sparse architecture with the best influence on the training loss.
	Even small changes in GraSP's architecture by a few weights with high saliencies lead to considerably
	better results, as shown by G-COPS for the ResNet$18$ on CIFAR-$100$ and Tiny ImageNet
	($p=0.99$) and the VGG$16$ on CIFAR-$10$ ($p=0.995$). For the latter,
	using G-COPS with $\ak=0.9999$, inducing a similarity of $0.992$ between
	G-COPS and GraSP, surpasses GraSP's accuracy by $5.85\%$. Therefore,
	gradient flow itself should not be the only criterion for choosing
	a sparse architecture to train. Nevertheless, sufficient gradient
	flow must be guaranteed for a successful training. As shown, G-COPS offers a possibility to combine these criteria fruitfully.
	
	\subsection{COPS is not a binary decision between two scores}
	Tables \ref{tab:results_cifar10} and \ref{tab:results_bigger} show that the grid-like search for $\ak$ tends to find a G-COPS mask close to either the SNIP or the GraSP mask. The results also show that in almost every case there is a significant performance gap between SNIP and GraSP. Thus, it seems likely that a better performing pruning mask lying in between those two will be close to the superior mask. Such a mask is always found by G-COPS and it clearly improves results compared to GraSP and SNIP, as discussed in Section \ref{subsec:Experimental-Results}. So, even if G-COPS is close to either SNIP or GraSP, it is not a binary decision between those pruning methods.  Preferably, COPS is esteemed as a method that combines the better parts of two pruning methods.
	
	\subsection{COPS pruning with partial class information}\label{subsec:ucops}
	\begin{figure}[tb]
		\centering
		
		\includegraphics[width=1\linewidth]{./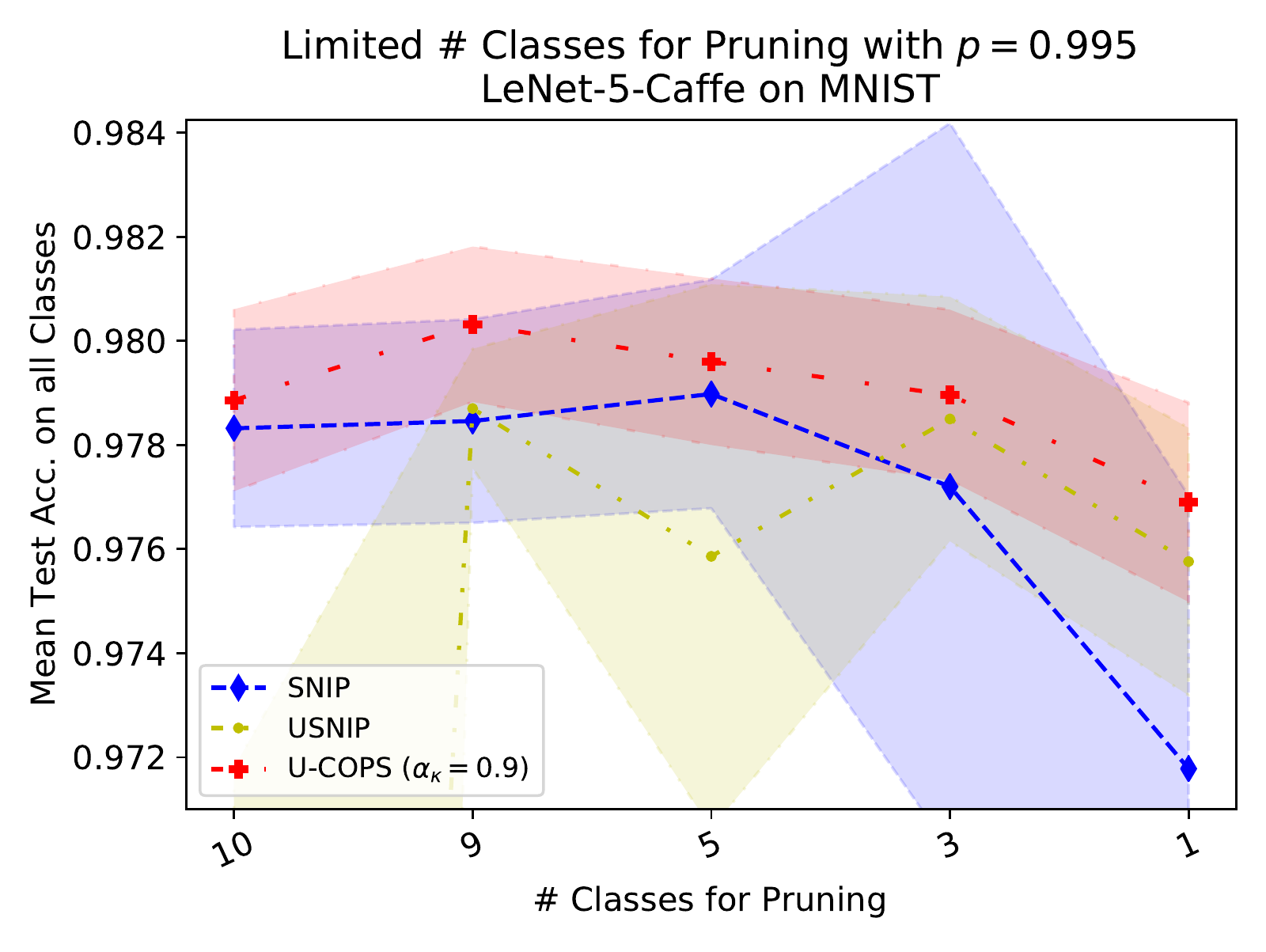}
		
		\caption{Comparison of SNIP, USNIP and U-COPS. Colored bars represent one standard deviation. Due to unstable training, USNIP has $89.8 \pm 7.4\%$ accuracy if all classes are used to compute the pruning score which is not shown in the plot for reasons of lucidity.}\label{fig:usnip2}
	\end{figure}
	The primal motivation for COPS is given by linking the pruning methods SNIP
	and GraSP. However, it is not restricted to this combination. In the
	following, we prune networks with partial class information available at the time of pruning. Meaning, $T\subset\{1,\ldots,C\}$ classes of the original $C$ classes are used
	to compute the pruning score. Thus, the classes $\{1,\ldots,C\}\setminus T$
	are not known to the pruning procedure. All classes $\{1,\ldots,C\}$ are further used to train and evaluate the pruned network. Analyzing a pruning score in such a way is important if the data used for computing the score will be an unrepresentative sample of the underlying data distribution. As baseline
	pruning procedure we again use SNIP \cite{lee_2018}. The control
	{USNIP}
	score \cite{lee_2019} is almost identical to SNIP. They only differ as the loss for USNIP is calculated with labels drawn uniformly at random instead of the real labels. By backpropagation of the loss, every
	class is included into the calculation of the USNIP score even if
	the corresponding input image will belong most probably to another class. %The COPS combined version of them is called U-COPS.
	We evaluate these methods on a small CNN, LeNet-$5$-Caffe, trained on
	MNIST. As MNIST is solvable almost perfectly while using only few
	parameters% (shown in the Supplementary Materials)
	, we chose the pruning rate
	$p=0.995$. Results for SNIP, USNIP and U-COPS are shown in \figurename{} \ref{fig:usnip2}.
	
	SNIP pruned networks reach stable results for $\#T\geq5$. %\footnote{Whereas for $\#T =10$, USNIP generates $3/5$ networks with test accuracy $< 90\%$ resulting in $89.80 \pm 7.38 \%$ (mean $\pm$ std) test accuracy.} 
	However with fewer classes used to calculate the score, SNIP's performance degrades steeply.
	For these low numbers of regarded classes, USNIP provides better results than SNIP.
	But, combining
	both methods via U-COPS reaches the best results for all treated $\#T$.
	In this experiment, 
	U-COPS with a single multiplier
	$\ak=0.9$ reaches better performances than both individual pruning
	methods for all numbers of considered classes. Also, $\ak=0.9$ provides,
	contrarily to SNIP and USNIP, stable results over all $\#T$. 
	
	This
	experiment shows that COPS is not limited to standard pruning methods like SNIP, GraSP or magnitude pruning,
	but also combines supervised and unsupervised pruning successfully. U-COPS even achieves better results than SNIP for $\#T = 10$. Therefore, pruning based on noisy labels via U-COPS, with noise level controlled by $\ak$, can generate better trainable sparse architectures. This effect for pruning is comparable to \emph{label smoothing} \cite{szegedy_2016} for standard DNN training.

	\subsection{Stability of COPS}\label{subsec:stability}
	\begin{figure}[tb]
		\centering
		
		\includegraphics[width=\linewidth]{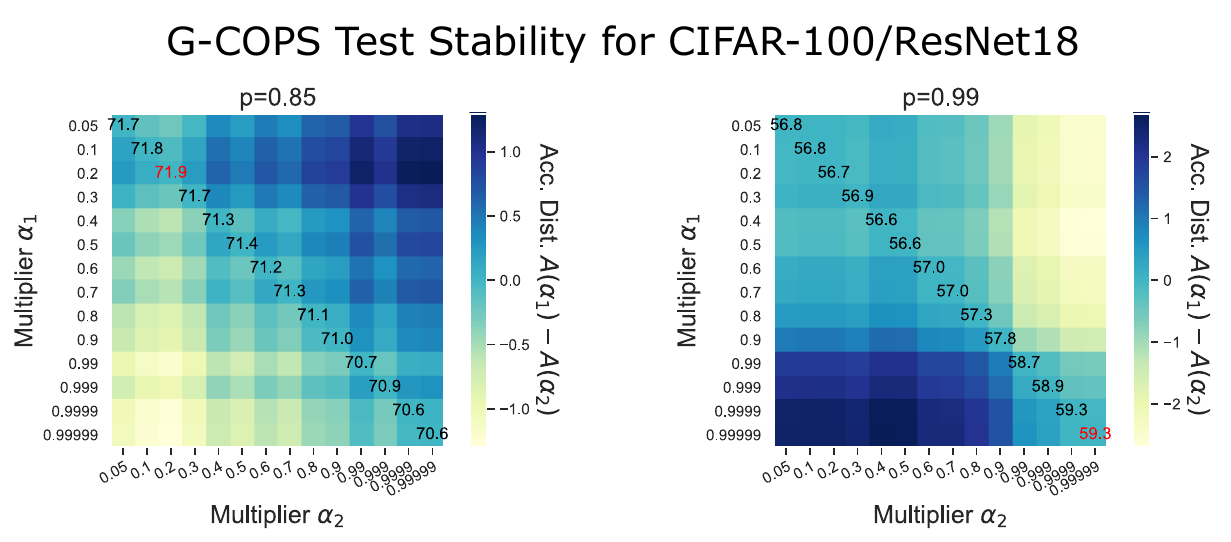}
		
		\caption{Stability analysis for G-COPS on CIFAR-$100$ with corresponding test accuracies $A(\ak)$ displayed on the diagonal. Best accuracy highlighted \color{red}{red}.}\label{fig:stability_cifar100}
	\end{figure}
	\figurename{} \ref{fig:stability_cifar100} analyzes G-COPS' stability for the CIFAR-$100$ experiment, Table \ref{tab:results_bigger}, for two pruning rates $p=0.85$ and $p=0.99$. On the heatmaps' diagonals, test accuracies for all considered constraint multipliers $\ak$ are displayed. A cell's color represents the accuracy difference of the G-COPS pruned networks with $\ak$ in corresponding row and column. As suspected, low constraints reach better results for $p=0.85$ than higher ones, whereas for $p=0.99$ this behaviour is reversed. The heatmap for $p=0.85$ shows that G-COPS is \emph{stable} for all $\ak$. Meaning, for small changes of the constraint multiplier $\ak$, the corresponding architectures also have small differences in test accuracy. But for $p=0.99$, there is an accuracy gap of $0.9\%$ between $\ak=0.9$ and $\ak=0.99$. This performance drop for $\ak \leq 0.9$ emphasizes the necessity of a strict control on the gradient flow for high pruning rates. Disregarding this accuracy gap, G-COPS also reaches stable results for $p=0.99$.
	
	The heatmap for $p=0.85$ also reveals that the best validation accuracy (Table \ref{tab:results_bigger}: $\ak=0.05$) not always provides the best test accuracy ($\ak=0.2$). 
	Still, since accuracy patterns for validation and testing are similar, G-COPS' stability guarantees the best $\ak$ for validation to be in a neighbourhood of similar test performance as the best $\ak$ for testing.
	
	\subsection{Why solving the dual problem?}
	\begin{table}[tb]
		\centering	
		\caption{\label{tab:rnd_search}Random search on CIFAR-$10$ with VGG$16$ for G-COPS ($\ak$) and $\l$ for the linear combination $S_0 + \l S_1$, $14$ hyperparameters each.} 	
		\begin{tabular}{@{}lcccc@{}}
			%\toprule
			%& \multicolumn{4}{c}{CIFAR-$10$/VGG$16$}\tabularnewline
			{Method} & {\scriptsize{}Best Test Acc.} & {\scriptsize{}Worst Test Acc.} &{\scriptsize{}Mean Test Acc.} & {\scriptsize{}Std Test Acc.}\tabularnewline
			\hline
			G-COPS & {\scriptsize{}$93.08\%$} & {\scriptsize{}$92.79\%$} & {\scriptsize{}$92.93\%$} & {\scriptsize{}$0.16\%$}\tabularnewline
			$S_0 + \l S_1$ & {\scriptsize{}$92.77\%$} & {\scriptsize{}$92.46\%$} & {\scriptsize{}$92.62\%$} & {\scriptsize{}$0.20\%$} \tabularnewline
			
			%\bottomrule
		\end{tabular}
	\end{table}
	Using the strong duality \cite{boyd_2004} of the problem solved by Algorithm \ref{alg:cops} with constraint multiplier $\ak$, we see that the COPS mask $\ma$ can equivalently be found as the solution of
	\begin{IEEEeqnarray}{c}
		\min_{m\in \xr} S_0(m) + \la S_1(m) \label{eq:simple_lc}\;,
	\end{IEEEeqnarray}
	where $\la$ solves \eqref{eq:abstract_dual_prob}. %Thus, $\ma$ is simply given by optimizing a linear combination of the two scores $S_0$ and $S_1$. 
	A natural question is why using COPS should be more convenient than combining two scores $S_0$ and $S_1$ naively via $S_0 + \l S_1$.
	%In the following, we will show that using Algorithm \ref{alg:cops} compared to solving \eqref{eq:simple_lc} is twofold. 
	First, $\ak$ is an interpretable hyperparameter, as shown in Section \ref{subsec:How-to-Choose}. An $\ak$ is closely related to the similarity of the COPS mask and its underlying masks, see \figurename{} \ref{fig:mask_sim}. It also controls the strictness of the corresponding constraint $\ak \cdot \k_{\min}$. For a specific task, this leads to a valuable prior knowledge for finding the right hyperparameter $\ak$. \Ie high pruning rates result in low gradient flow for SNIP, therefore a strict control of the GraSP score is necessary. %, as done with the fine grained search for $\ak$ close to $1$ in section \ref{subsec:Experimental-Results}. 
	Choosing $\l$ as hyperparameter on the other hand only balances the scores which gives no additional insight. Second, $(0, 1) \ni \ak$ is a much smaller interval than $(0, \infty) \ni \l$, which improves hyperparameter search. Table \ref{tab:rnd_search} shows a comparison for random search \cite{bergstra_2012} between G-COPS and a pruning mask found by linearly combining $S_0 + \l S_1$. In order to help out the random search field for $\l \in (0,\infty)$ we used the knowledge provided by G-COPS and further restricted $\l \in (0,90)$, as $80 < \la < 90$ for the optimal $\la$ found by G-COPS for $\ak = 0.99999$. We tested $14$ different hyperparameters with $5$ random seeds each, the same number as for the manually designed grid-like search. The random parameters were chosen uniformly i.i.d. with $\ak \in (0,1)$ and $\l \in (0,90)$. 
	The results in Table \ref{tab:rnd_search} show that the smaller interval $(0,1)$ for $\ak$, together with G-COPS' robustness shown in Section \ref{subsec:stability}, clearly speeds up the finding of a well performing $\ak$, compared to $\l$ for the linear combination. Naively combining scores linearly does not find an adequate $\l$ in $14$ tries. Actually, the best performing hyperparameter found by the linear combination is worse than the worst one tested for G-COPS, showing that G-COPS uses hyperparameters $\ak$ in a better range than $\l \geq 0$.%In Section \ref{subsec:stability}, we show that pruning with Algorithm \ref{alg:cops} is robust against small variations in $\ak$. Therefore, less hyperparameters $\ak$ are likely needed to achieve good results compared to $\l \geq 0$. 
	%-------------------------------------------------------------------------
	\section{Conclusions}
	
	Training sparse architectures from scratch requires to trade off between
	different criteria for evaluating the pruned network before training.
	Both SOTA one-shot pruning
	methods applied before training starts, SNIP and GraSP, are
	focused on fulfilling only one of these criteria. 
	On various tasks,
	network architectures, pruning rates and combinations of pruning scores we have shown that combining
	two pruning methods via COPS leads to better results than using a single one. Especially
	for high pruning rates, 
	modifying GraSP only slightly with SNIP can surpass GraSP's result considerably.
	However, constraining the gradient flow only improves SNIP marginally for smaller pruning rates. %as COPS introduces an additional hyperparameter compared to SNIP and GraSP. %Compared to SNIP and GraSP, COPS introduces an additional hyperparameter $\ak$. %Thus, using SNIP without control is more economic for smaller pruning rates.  
	The introduced $\ak$ is an interpretable hyperparameter, controlling the strictness of COPS' constraint. Therefore, COPS is more efficient in finding a well performing hyperparameter than naively combining two scores via a linear combination.
	COPS defines a pruning mask as a solution of a LP. Algorithm \ref{alg:cops} provides a method to compute this mask 
	faster than the best reported expected time for solving a
	LP for an arbitrary, dense constraint matrix \cite{jiang_2020}. This might be of intensified interest if more than one pruning step is performed as
	COPS can also be applied to iterative pruning. But also for structured
	methods, where pruning is quite restrictive, combining pruning scores raises new opportunities. Likewise, combining more than two pruning scores seems a possibility to improve
	sparse training even further.
	%\IEEEtriggeratref{14}
	%{\small
	\bibliographystyle{IEEEtran}
	\bibliography{ref_short}
	%}
\end{document}